\newcommand{\G}{\mathcal{G}}
\newcommand{\A}{\mathcal{A}}
\newcommand{\N}{\mathcal{N}}
\newcommand{\R}{\mathbb{R}}
\newcommand{\E}{\mathbb{E}}
\newtheorem{theorem}{Theorem}[section]
\newtheorem{lemma}[theorem]{Lemma}
\newtheorem{proposition}[theorem]{Proposition}
\newtheorem{corollary}[theorem]{Corollary}
\newenvironment{proof}[1][Proof]{\begin{trivlist}
\item[\hskip \labelsep {\bfseries #1}]}{\end{trivlist}}
\newenvironment{definition}[1][Definition]{\begin{trivlist}
\item[\hskip \labelsep {\bfseries #1}]}{\end{trivlist}}
\newcommand{\qed}{\nobreak \ifvmode \relax \else
      \ifdim\lastskip<1.5em \hskip-\lastskip
      \hskip1.5em plus0em minus0.5em \fi \nobreak
      \vrule height0.75em width0.5em depth0.25em\fi}
\title{Tight Measurement Bounds for Exact Recovery of Structured Sparse Signals}
\author{Nikhil Rao$^1 ~\ $ Benjamin Recht$^2 ~\ $  Robert Nowak$^1 ~\ $ \\
\\
$^1$ Department of Electrical and Computer Engineering \\
$^2$ Department of Computer Sciences \\
University of Wisconsin-Madison}
\date{}
\begin{document}
\maketitle

\begin{abstract}
Standard compressive sensing results state that to exactly recover an $s$ sparse signal in $\R^p$, one requires $\mathcal{O}(s \cdot \log p)$ measurements. While this bound is extremely useful in practice, often real world signals are not only sparse, but also exhibit structure in the sparsity pattern. We focus on group-structured patterns in this paper.  Under this model, groups of signal coefficients are active (or inactive) together.  The groups are predefined, but the particular set of groups that are active (i.e., in the signal support) must be learned from measurements. We show that exploiting  knowledge of groups can further reduce the number of measurements required for exact signal recovery, and derive universal bounds for the number of measurements needed. The bound is universal in the sense that it only depends on the number of groups under consideration, and not the particulars of the groups (e.g., compositions, sizes, extents, overlaps, etc.).  Experiments show that our result holds for a variety of overlapping group configurations.
\end{abstract}

\section{Introduction}

In many fields such as genetics, image processing, and machine learning, one is faced with the task of recovering very high dimensional signals from relatively few measurements. In general this is not possible, but fortunately many real world signals are, or can be transformed to be, sparse, meaning that only a small fraction signal coefficients are non-zero. Compressed Sensing \cite{CRT,donoho} allows us to recover sparse, high dimensional signals with very few measurements. In fact,  results indicate that one only needs $\mathcal{O}(s\cdot \log p)$ random measurements to exactly recover an $s$ sparse signal of length $p$.

In many applications however, one not only has knowledge about the sparsity of the signal, but some additional information about the structure of the sparsity pattern as well: 
\begin{itemize} 
\item In genetics, the genes are arranged into pathways, and genes belonging to the same pathway are highly correlated with each other \cite{pathway}.
\item In image processing, the wavelet transform coefficients can be modeled as belonging to a tree, with parent-child coefficients exhibiting similar properties \cite{crouse98, romberg, nricip11}.
\item In wideband spectrum sensing applications, the spectrum typically displays clusters of non-zero frequency coefficients, each corresponding to a narrowband transmission \cite{analogCS}
\end{itemize}

In cases such as these, the sparsity pattern can be represented as a union of certain groups of coefficients (e.g., coefficients in certain pathways, tree branches, or clusters).
This knowledge about the signal structure can help further reduce the number of measurements one needs to exactly recover the signal. Indeed, the authors in \cite{huang09} derive information theoretic bounds for the number of measurements needed for a variety of signal ensembles, including trees. In, \cite{modelbased,model09}, the authors show that one needs far fewer measurements when the signal can be expressed as lying in a union of subspaces, and explicit bounds are derived when using a modified version of CoSaMP \cite{cosamp} to recover the signal. In this paper, we derive bounds on the number of random iid gaussian measurements needed to exactly recover a sparse signal when its pattern of sparsity lies in a union of groups, when solving the \emph{convex} recovery algorithm introduced in \cite{jacob}.

We analyze the group-structured sparse recovery problem using a random Gaussian measurement model. We emphasize that although the derivation assumes the measurement matrix to be Gaussian, it can be extended to \emph{any} subgaussian case, by paying a small constant penalty, as shown in \cite{subgaussian}. We restrict ourselves to the Gaussian case here since it highlights the main ideas and keeps the analysis as simple as possible.

Note that in this work, variables can be grouped into arbitrary sets, and we make \emph{no} assumptions about the nature of the groups, except that they are known in advance. In short, we derive bounds for any generic group structure of variables, whether the groups overlap or form a partition of the ambient high dimensional space.


To the best of our knowledge, these results are new and distinct from prior theoretical characterizations of group lasso methods. Asymptotic consistency results are derived for the group lasso when the groups partition the space of variables in \cite{bachConsistency}. Similarly, in \cite{zhang09}, the authors consider the groups to partition the space, and derive conditions for recovery using the group lasso \cite{yuanlin}. In \cite{jenatton, jenatton10}, the authors derive consistency results for the group lasso under arbitrary groupings of variables. In \cite{Mest}, the authors consider overlapping groups and derive sample bounds under the group lasso \cite{yuanlin} setting. The authors in \cite{jacob}  derive consistency results in an asymptotic setting, for the group lasso with overlap, but do not provide exact recovery results. The general group lasso scenarios is different from what we consider, in that the group lasso yields vectors whose support can be expressed as a complement of a union of groups, while we consider cases where we require the support to lie in a union of groups, a distinction made in \cite{jacob}. Note that in the case of non-overlapping groups, the complement of a union of groups is a union of (a different set of) groups.  In this paper, we (a) derive sample complexity bounds in a compressive-sensing framework when the measurement matrix is \emph{i.i.d.} gaussian. (b) We focus on non-asymptotic sample bounds, and in a  case where the support is contained in a union of groups, and (c) make no assumptions about the nature of groups.  To derive our results, we appeal to the notion of restricted minimum singular values of an operator.

We bound number of measurements needed for exact recovery with two terms. The first term grows linearly in the total number of non-zero coefficients (with a small constant of proportionality). This is close to the bare minimum of one measurement per non-zero component. The second term only depends on the number of groups under consideration, and not the particulars of the groups (e.g., compositions, sizes, extents, etc.). In particular, the groups need not be disjoint. The degree to which groups overlap, remarkably, has no effect on our bounds. In this regard, our bounds can be termed to be \emph{universal}. This is somewhat surprising since overlapping groups are strongly coupled in the observations, tempting one to suppose that overlap may make recovery more challenging.

Our main result shows that for signals with support on $k$ of $M$ possible groups, exact recovery is possible from $(\sqrt{2\log(M-k)} + \sqrt{B})^2k + kB$ measurements using an overlapping group lasso algorithm, B being the maximum group size. Note that the bound depends on the sparsity $s$ of the signal via the $kB$ term, which is a loose upper bound for $s$ when the groups highly overlap. This arises as an artifact of the general approach we use to bound the number of measurements, and in specific cases, this can be made much tighter.

Our proof derives from the techniques developed in \cite{venkat}. The rest of this paper is organized as follows: in Section \ref{sec:theory}, we lay the groundwork for the main contribution of the paper, \emph{viz.} applying the techniques from \cite{venkat} tot he specific setting of group lasso with overlapping groups. We describe the theory and reasoning behind this approach. In Section \ref{sec:normalcone} we derive bounds on the number of random \emph{i.i.d.} gaussian measurements needed to be taken for exact recovery of group sparse signals. We further derive bounds for the number of measurements required for robust recovery of signals as well. Section \ref{sec:results} outlines the experiments we performed and the corresponding results obtained. We conclude our paper in Section \ref{sec:conc}.

\subsection{Notations}
\label{sec:notations}
\vspace{-2mm}
We first introduce notations that we will use for the rest of the paper. Consider a signal of length $p$, that is $s$ sparse. Note here that in case of multidimensional signals like images, we assume they are vectorized to have length $p$. The coefficients of the signal are grouped into sets $\{G_i\}_{i=1}^M$, such that $\forall i \in \{1,2,\cdots,M\}, G_i \subset \{1,2, \cdots ,p\}$. We denote the set of groups by $\G = \{G_i\}_{i=1..M}$, and $|\cdot|$ denotes the cardinality of a set. We let $x^\star$ be the (sparse) signal to be recovered, whose non zero coefficients lie in $k$ of the $M$ groups $\G^\star \subset \mathcal{G}$.  Formally, 
\[
\G^\star = \left\{ G_i \in \G^\star : \mathrm{supp}(x^\star) \cap G_i \neq 0 \right\}
\]
We assume $|\G^\star| = k \leq M = |\G|$. We let $\Phi_{n\times p}$ be a measurement matrix consisting of \emph{i.i.d.} gaussian entries of mean 0 and unit variance so that every column is a realization of an \emph{i.i.d.} gaussian length $n$ vector with  covariance $I$. For any vector $x \in \mathbb{R}^p$, we denote by $x_G$ a vector in $\R^p$ such that $(x_G)_i = x_i$ if $i \in G$, and 0 otherwise. We denote the observed vector by $y \in \R^n ~\ : y = \Phi x^\star$. The absence of a subscript following a norm $|| \cdot ||$ implies the $\ell2$ norm. The dual norm of $\|\cdot\|_p$ is denoted by $\|\cdot\|_p^*$. The convex hull of a set of points $S$ is denoted by $\mathrm{conv}(S)$.

\section{Preliminaries}
\label{sec:theory}
\vspace{-2mm}
In this Section, we will set up the problem that we wish to solve in this paper. We will argue as to why exact recovery of the signal corresponds to the minimization of the atomic norm of the signal, with the atoms obeying certain properties governed by the signal structure. 
\subsection{Atoms and the atomic set}

To begin with, let us formalize the notion of atoms and the atomic norm of a signal (or vector). We will restrict our attention to group-sparse signals in $\R^p$, though the same concepts can be extended to other spaces as well. We assume that $x \in \R^p$ can be decomposed as :
\[
x = \sum_{i = 1}^k c_i a_i, ~\ c_i \geq 0 
\]
The vectors $a_i$ are called \emph{atoms}, and form the basic building blocks of any signal, which can be represented as a conic combination of the atoms. We denote $\A = \{ a_i\}$ to be the \emph{atomic set}. Given a vector $x\in\R^p$ and an atomic set, we define the \emph{atomic norm} as
\begin{equation}
\label{anormdef}
||x||_{\A} = \inf \left\{ \sum_{a \in \A} c_a : x =  \sum_{a \in \A} c_aa, ~\  c_a \geq 0   ~\ \forall a \in \A \right\}
\end{equation}
The atomic decomposition of the signal has been known to be the simplest representation of the signal in some sense. Hence, to obtain a ``simple" representation of a vector, we look to minimize the atomic norm subject to constraints (equation (\ref{minAnorm})): 
\begin{equation}
\label{minAnorm}
\hat{x} = \underset{x \in \R^p}{\operatorname{argmin}} ||x||_\A  ~\ \textbf{s.t. } y = \Phi x
\end{equation}
Indeed, when the atoms are merely the canonical basis in $\R^p$, the atomic norm reduces to the standard $\ell_1$ norm, and minimization of the atomic norm yields the well known \emph{lasso} procedure \cite{tibshirani}. 

Assuming we are aware of the group structure $\G$, we now proceed to define the atomic set and the corresponding atomic norm for our framework: 
\begin{align*}
&\forall G \in \G, \text{ let } \\  &A_G = \{a^G \in \mathbb{R}^p : ||(a^G)_G||_2 = 1, (a^G)_{G^c} = 0 \} 
\end{align*}
\begin{equation}
\label{aset}
\A = \{ A_G \}_{G \in \G}
\end{equation}
We now show that the atomic norm of a vector $x \in \R^p$ under the atomic set defined in equation (\ref{aset}) is equivalent to the overlapping group lasso norm defined in \cite{jacob}, a special case of which is the standard group lasso norm \cite{yuanlin}. Thus, minimizing the atomic norm in this case is exactly the same as the group lasso with overlapping groups.
\vspace{-1mm}
\begin{lemma}
\label{lemogl}
Given any arbitrary set of groups $\G$, we have 
\[
||x||_{\A} = \Omega_{overlap}^\G (x)
\]
where $\Omega_{overlap}^\G (x)$ is the overlapping group lasso norm defined in \cite{jacob}.
\end{lemma}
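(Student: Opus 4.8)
The plan is to prove the identity by showing each quantity is at most the other, exploiting that both are infima of sums of $\ell_2$ ``pieces'' of $x$ supported on the groups. First I would recall the definition from \cite{jacob}: $\Omega_{overlap}^\G(x) = \inf\{\sum_{G\in\G}\|v^G\|_2 : \sum_{G\in\G} v^G = x,\ \mathrm{supp}(v^G)\subseteq G\}$, the infimum being over all such \emph{latent} decompositions (and $+\infty$ if none is coordinate-wise feasible, i.e.\ if $\mathrm{supp}(x)\not\subseteq\bigcup_{G}G$; in that degenerate case $\|x\|_\A=+\infty$ as well and there is nothing to prove). Likewise, by definition (\ref{anormdef})--(\ref{aset}), $\|x\|_\A$ is an infimum over finite conic combinations of atoms drawn from the sets $A_G$.

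For $\|x\|_\A \le \Omega_{overlap}^\G(x)$: take any feasible latent decomposition $x=\sum_G v^G$. For each $G$ with $v^G\neq 0$ set $a^G:=v^G/\|v^G\|_2$; then $\|(a^G)_G\|_2=1$ and $(a^G)_{G^c}=0$, so $a^G\in A_G\subseteq\A$, and $x=\sum_{G:v^G\neq 0}\|v^G\|_2\,a^G$ is a conic combination of atoms with coefficients $c_{a^G}=\|v^G\|_2$. Hence $\|x\|_\A\le\sum_G\|v^G\|_2$; taking the infimum over latent decompositions gives the claim.

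For the reverse inequality $\Omega_{overlap}^\G(x)\le\|x\|_\A$: take any conic representation $x=\sum_j c_j a_j$ with $c_j\ge 0$ and each $a_j\in A_{G(j)}$ for some group $G(j)$ (if an atom's support lies in several groups, fix one such choice). Grouping terms by their assigned group, set $v^G:=\sum_{j:G(j)=G} c_j a_j$; then $\mathrm{supp}(v^G)\subseteq G$ and $\sum_G v^G=x$, so this is feasible for $\Omega_{overlap}^\G$, while the triangle inequality together with $\|a_j\|_2=1$ yields $\|v^G\|_2\le\sum_{j:G(j)=G} c_j$. Summing over $G$ gives $\sum_G\|v^G\|_2\le\sum_j c_j$, and taking the infimum over conic representations gives $\Omega_{overlap}^\G(x)\le\|x\|_\A$. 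Combining the two inequalities proves the lemma.

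The only delicate points, which I would spell out, are the bookkeeping when an atom's support is contained in more than one group (handled by an arbitrary but fixed assignment $G(j)$, which never increases the sum) and the degenerate case where $x$ has mass outside $\bigcup_G G$ (both norms equal $+\infty$). I do not expect a genuine obstacle: the lemma is essentially a definitional identification, since normalizing a group-supported vector produces exactly an atom of $\A$ and conversely, so the two infima run over the same set of representations; the main care is simply in matching the index conventions of the two definitions.
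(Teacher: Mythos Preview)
Your proof is correct and follows essentially the same approach as the paper: both rest on the observation that a group-supported vector $v^G$ corresponds bijectively to a scaled atom via $v^G = c_G\, a^G$ with $c_G=\|v^G\|$, so the two infima range over the same representations. The paper simply asserts this change of variables in one line, whereas you spell it out as two inequalities and carefully handle the bookkeeping (atoms lying in several groups, the degenerate support case); your version is more rigorous but not a different argument.
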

\begin{proof}
In (\ref{anormdef}), we can substitute $v_G = c_G  a$, giving us $c_G = |c_G| \cdot ||a|| = || c_G a|| = \|v_G\|$. Hence, 
\begin{align*}
||x||_{\A} &= inf \left\{ \sum_{a \in \A} c_a : x =  \sum_{a \in \A} c_aa ~\  c_a \geq 0 ~\  \forall a \in \A \right\} \\
&= inf \left\{ \sum_{G \in \G} ||v_G|| ~\ : x = \sum_{G \in \G} v_G \right\} \\
&= \Omega_{overlap}^\G(x)  \qed
\end{align*}
\end{proof}

\begin{corollary}
\label{corrgl}
Under the atomic set defined in \ref{aset} , when $\G$ partitions $\R^p$, 
\[
||x||_{\A} = \sum_{G \in \G} ||x_G||
\]
\end{corollary}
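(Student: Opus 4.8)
The plan is to read this off directly from Lemma~\ref{lemogl}. That lemma identifies $\|x\|_\A$ with the overlapping group lasso norm $\Omega_{overlap}^\G(x) = \inf\{\sum_{G \in \G}\|v_G\| : x = \sum_{G \in \G} v_G\}$, where, as in the proof of the lemma, each $v_G$ is supported on $G$ (i.e. $(v_G)_{G^c}=0$). So it suffices to show that when $\G$ partitions $\{1,\dots,p\}$ this infimum collapses to the single value $\sum_{G \in \G}\|x_G\|$.

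The key observation is that a partition makes the feasible set of the infimum a singleton. If the groups are pairwise disjoint with union $\{1,\dots,p\}$, then every coordinate $j$ lies in exactly one $G \in \G$, and any feasible $x = \sum_{G\in\G} v_G$ with $(v_G)_{G^c}=0$ must satisfy $x_j = (v_G)_j$ for that unique $G$. Hence $v_G = x_G$ for every $G$, the decomposition is forced, and $\|x\|_\A = \Omega_{overlap}^\G(x) = \sum_{G\in\G}\|x_G\|$.

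If one prefers a self-contained argument that does not route through Lemma~\ref{lemogl}, one can work directly from~(\ref{anormdef}): every atom of $\A$ lies in exactly one $A_G$ and is therefore supported on a single block of the partition, so in any conic representation $x = \sum_{a\in\A} c_a a$ the atoms lying in $A_G$ must sum to $x_G$. The triangle inequality gives $\|x_G\| \le \sum_{a \in A_G} c_a\|a\| = \sum_{a \in A_G} c_a$; summing over $G \in \G$ yields $\sum_{G\in\G}\|x_G\| \le \|x\|_\A$. The reverse inequality follows by exhibiting a representation that uses, for each $G$ with $x_G \neq 0$, the single atom $x_G/\|x_G\| \in A_G$ with coefficient $\|x_G\|$ (and no atom supported on $G$ when $x_G = 0$), which is feasible by the partition property and achieves $\sum_{G\in\G}\|x_G\|$.

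I expect no genuine obstacle here; this is essentially the standard fact that the overlapping group norm reduces to the ordinary group lasso norm on a partition. The only points needing a line of care are (i) interpreting ``$\G$ partitions $\R^p$'' as ``the index sets $\{G_i\}$ partition $\{1,\dots,p\}$'', so each coordinate is covered exactly once (without overlap the forced decomposition argument breaks), and (ii) treating blocks $G$ with $x_G = 0$ so that the explicit minimizing representation is still well defined.
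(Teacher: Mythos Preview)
Your proposal is correct and follows essentially the same route as the paper: invoke Lemma~\ref{lemogl} and then use that $\Omega_{overlap}^\G(x)$ reduces to $\sum_{G\in\G}\|x_G\|$ when the groups partition the index set. The paper's proof is a one-liner asserting this reduction without justification, whereas you actually supply the forced-decomposition argument (and a self-contained alternative), so your version is strictly more detailed but not different in spirit.
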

\begin{proof}
$\Omega_{overlap}^\G = \sum_{G \in \G} ||x_G||$ in the non overlapping case. \qed
\end{proof}

Thus, (\ref{minAnorm}) yields:
\begin{equation}
\label{minAnormov}
\hat{x} = \underset{x \in \R^p}{\operatorname{argmin}} ~\ \Omega_{overlap}^\G(x)  ~\ \textbf{s.t. } y = \Phi x
\end{equation}
which can be solved using  \cite{jacob}.

Also note that we can directly compute the dual of the atomic norm from the set of atoms
\begin{equation}\label{eq:atomic-norm-dual}
	\|u\|_{\A}^* = \sup_{a\in \A} \langle a,u \rangle = \max_{G\in \G} ||u_G||
\end{equation}
The dual norm will be useful in our derivations below.

\subsection{Gaussian Widths and Exact Recovery}

Following \cite{venkat}, we define the \emph{tangent cone} and \emph{normal cone} at $x^\star$ with respect to $conv(\A)$ under $||x||_\A$ as \cite{wets}:
\begin{align}
\label{TconeD}
\mathcal{T}_{\A}(x^\star) &= \operatorname{cone} \{ z - x^\star ~\ : ||z||_\A \leq ||x^\star||_\A \}\\
\label{NconeD}
\mathcal{N}_\A(x^\star) &= \{ u~:~ \langle u,z \rangle \leq 0,~~\forall z\in \mathcal{T}_{\A}(x^\star)\} \\
\nonumber &= \{u~:~ \langle u, x^\star \rangle = t\|x\|_{\A}~ \\
\nonumber ~&\mbox{and}~\|u\|_{\A}^* \leq t~\mbox{for some}~t\geq 0\}
\end{align}

We note that, from \cite{venkat} (Prop. 2.1), $\hat{x} = x^\star$ (\ref{minAnorm}) is unique \emph{iff} 
\begin{equation}
\label{prop}
null(\Phi) \cap \mathcal{T}_{\A}(x^\star) = \{0\} 
\end{equation}
Hence, we require that the tangent cone at $x^\star$ intersects the nullspace of $\Phi$ only at the origin, to guarantee exact recovery. 

Before we state the main recovery result from~\cite{venkat}, we define the \emph{gaussian width} of a set:
\begin{definition}
Let $\mathbb{S}^{p-1}$ denote the unit sphere in $\R^p$.  The Gaussian width $\omega(S)$ of a set $S \in \mathbb{S}^{p-1}$ is 
\[
\omega(S) = \mathbb{E}_g \left[ \sup_{z \in S} g^Tz \right]
\]
where $g \sim \mathcal{N}(0,I)$
\end{definition}
Gordon uses the Gaussian width to provide bounds on the probability that a random subspace of a certain dimension misses a subset of the sphere~\cite{gordon}.  In~\cite{venkat}, these results are specialized to the case of atomic norm recovery.  In particular, we will make use of the following: 

\begin{proposition} \label{corl:width}[\cite{venkat}, Corollary 3.2]
Let $\Phi: \R^p \rightarrow \R^n$ be a random map with i.i.d. zero-mean Gaussian entries having variance $1/n$.  Further let $\Omega = T_\A(x^*) \cap \mathbb{S}^{p-1}$ denote the spherical part of the tangent cone $T_\A(x^\star)$.  Suppose that we have measurements $y = \Phi x^\star$, and we solve the convex program (\ref{minAnorm}). Then $x^\star$ is the unique optimum of (\ref{minAnorm}) with high probability provided that
	\begin{equation*}
	n \geq \omega(\Omega)^2+\mathcal{O}(1).
	\end{equation*}
\end{proposition}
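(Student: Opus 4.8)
The plan is to recast the exact-recovery condition (\ref{prop}) as the statement that a random linear map kills no nonzero vector of the tangent cone, and then control that event via Gordon's Gaussian comparison inequality (the ``escape through the mesh'' phenomenon). By (\ref{prop}), $\hat{x} = x^\star$ is the unique optimum of (\ref{minAnorm}) exactly when $null(\Phi) \cap \mathcal{T}_{\A}(x^\star) = \{0\}$, and since $\mathcal{T}_{\A}(x^\star)$ is a cone, by homogeneity this is equivalent to
\[
\min_{z \in \Omega} \|\Phi z\| > 0, \qquad \Omega = \mathcal{T}_{\A}(x^\star) \cap \mathbb{S}^{p-1}.
\]
So it suffices to show that the smallest ``gain'' of $\Phi$ over $\Omega$ is strictly positive with high probability whenever $n$ exceeds $\omega(\Omega)^2$ by a constant.

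First I would lower bound the expected gain $\E\big[\min_{z \in \Omega} \|\Phi z\|\big]$. Writing $\|\Phi z\| = \max_{w \in \mathbb{S}^{n-1}} w^T \Phi z$ and viewing $(z,w) \mapsto w^T \Phi z$ as a Gaussian process on $\Omega \times \mathbb{S}^{n-1}$, a comparison with the decoupled process $(z,w) \mapsto g^T z + h^T w$ for independent $g \sim \N(0,I_p)$, $h \sim \N(0,I_n)$ via Gordon's min-max inequality gives
\[
\E\Big[\min_{z \in \Omega} \|\Phi z\|\Big] \;\geq\; \frac{1}{\sqrt{n}}\,\big(\lambda_n - \omega(\Omega)\big),
\]
where $\lambda_n = \E\|h\|$ with $h \sim \N(0,I_n)$, the factor $1/\sqrt{n}$ reflects the variance $1/n$ normalization of $\Phi$, and $\omega(\Omega) = \E_g\big[\sup_{z \in \Omega} g^T z\big]$ is precisely the Gaussian width defined above. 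The key point is that $\lambda_n$ is essentially $\sqrt{n}$: one has the elementary bounds $\tfrac{n}{\sqrt{n+1}} \leq \lambda_n \leq \sqrt{n}$, so the right-hand side is strictly positive as soon as $n$ is a constant more than $\omega(\Omega)^2$.

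Next I would promote this expectation bound to one holding with high probability. The map $\Phi \mapsto \min_{z \in \Omega}\|\Phi z\|$ is a minimum of the maps $\Phi \mapsto \|\Phi z\|$, each of which is Lipschitz in the Frobenius norm of $\Phi$ with constant $\|z\|/\sqrt{n} = 1/\sqrt{n}$, hence the minimum is $(1/\sqrt{n})$-Lipschitz, and Gaussian concentration of measure yields
\[
\mathbb{P}\Big[\min_{z \in \Omega}\|\Phi z\| \,\leq\, \tfrac{1}{\sqrt{n}}\big(\lambda_n - \omega(\Omega)\big) - t\Big] \;\leq\; e^{-n t^2/2}, \qquad t > 0.
\]
When $n \geq \omega(\Omega)^2 + 1$ one has $\lambda_n \geq \sqrt{n-1} \geq \omega(\Omega)$, so taking $t = (\lambda_n - \omega(\Omega))/\sqrt{n} \geq 0$ makes the threshold vanish and shows $\min_{z \in \Omega}\|\Phi z\| > 0$ --- equivalently, (\ref{prop}) holds and $\hat{x} = x^\star$ --- with probability at least $1 - \exp\!\big(-\tfrac12(\lambda_n - \omega(\Omega))^2\big)$. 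This probability is positive for $n \geq \omega(\Omega)^2 + \mathcal{O}(1)$ and tends to $1$ as the gap $n - \omega(\Omega)^2$ grows, which is the asserted ``with high probability'' conclusion.

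The single genuinely non-elementary ingredient, and the step I expect to be the main obstacle, is Gordon's Gaussian comparison (min-max) inequality used in the second step: it is proved by a Slepian-type interpolation over Gaussian processes and is precisely what turns control of the width $\omega(\Omega)$ into control of the restricted minimum singular value of $\Phi$. Everything else --- the reduction through (\ref{prop}), the Lipschitz concentration estimate, and the comparison of $\lambda_n$ with $\sqrt{n}$ --- is routine. Mild technical care is needed to guarantee the relevant suprema and minima are attained (work with the closed tangent cone so $\Omega$ is compact, or pass to a countable dense subset) and that $\Omega \neq \emptyset$, but none of this is a real difficulty.
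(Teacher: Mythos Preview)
Your proposal is correct, but note that the paper does not actually prove this proposition: it is stated as a direct quotation of Corollary~3.2 from \cite{venkat} and is used as a black box. Your sketch is essentially the argument given in \cite{venkat} (and ultimately in Gordon's original work \cite{gordon}): reduce unique recovery to positivity of $\min_{z\in\Omega}\|\Phi z\|$ via (\ref{prop}), then invoke Gordon's Gaussian min--max comparison to get the lower bound $\lambda_n-\omega(\Omega)$, and finally upgrade to a high-probability statement. One minor remark: in \cite{venkat} the high-probability conclusion is obtained directly from the probabilistic form of Gordon's inequality rather than by a separate Lipschitz concentration step, but your route via concentration of the $1/\sqrt{n}$-Lipschitz functional is equally valid and yields the same $\exp\!\big(-\tfrac12(\lambda_n-\omega(\Omega))^2\big)$ tail.
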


To complete our problem setup we will also restate Proposition 3.6 in \cite{venkat} :
\begin{proposition} (Proposition 3.6 in \cite{venkat})
\label{prop3.6}
Let $C$ be any non-empty convex cone in $\R^p$, and let $g \sim \N(0,I)$ be a Gaussian vector. Then:
\begin{equation}
\label{propVenkat}
\omega(C \cap \mathbb{S}^{p-1}) \leq \E_g[\mathrm{dist}(g,C^*)]
\end{equation}
where $\mathrm{dist}( . , .)$ denotes the Euclidean distance between a point and a set, and $C^*$ is the dual cone of $C$ 
\end{proposition}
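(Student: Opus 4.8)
The plan is to prove the inequality deterministically for each fixed $g$ and then integrate over $g \sim \N(0,I)$. So I would fix a realization of $g$ and recall that $C^* = \{u : \langle u, z\rangle \le 0 ~\text{for all}~ z \in C\}$ is the polar cone, matching the sign convention used for the normal cone in~(\ref{NconeD}); in particular $0 \in C^*$, so $\mathrm{dist}(g, C^*) \le \|g\|$ and every quantity below is finite.

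The key step is a one-line convex-duality bound. Fix an arbitrary $u \in C^*$ and an arbitrary $z \in C \cap \mathbb{S}^{p-1}$. Splitting $\langle g, z\rangle = \langle g - u, z\rangle + \langle u, z\rangle$ and using $\langle u, z\rangle \le 0$ (because $u \in C^*$ and $z \in C$), then Cauchy--Schwarz together with $\|z\| = 1$, gives $\langle g, z\rangle \le \langle g - u, z\rangle \le \|g - u\|$. Taking the supremum over $z \in C \cap \mathbb{S}^{p-1}$ and then the infimum over $u \in C^*$ yields
\[
\sup_{z \in C \cap \mathbb{S}^{p-1}} \langle g, z\rangle ~\le~ \inf_{u \in C^*} \|g - u\| ~=~ \mathrm{dist}(g, C^*).
\]
Finally I would take $\E_g$ of both sides: the left-hand side is by definition $\omega(C \cap \mathbb{S}^{p-1})$, and $g \mapsto \mathrm{dist}(g, C^*)$ is $1$-Lipschitz, hence measurable and (by the bound $\mathrm{dist}(g,C^*)\le\|g\|$) integrable, so the right-hand expectation is well defined. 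Monotonicity of expectation then gives $\omega(C \cap \mathbb{S}^{p-1}) \le \E_g[\mathrm{dist}(g, C^*)]$, as claimed.

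There is no genuine obstacle here — the argument is elementary convex geometry followed by monotonicity of expectation — and the work is really in getting conventions right. The two points to state carefully are: (i) the sign in $\langle u, z\rangle \le 0$ must be taken consistently with the polar-cone convention for $C^*$ (with the opposite ``positive dual'' convention one reaches the same conclusion after integrating, using the symmetry $-g \sim g$); and (ii) the supremum appearing in the width is already over the sphere, so no passage to the unit ball is needed. An equivalent, slightly more geometric route would be to note $\sup_{z \in C,\,\|z\| \le 1}\langle g, z\rangle = \|\Pi_C(g)\|$, where $\Pi_C$ is Euclidean projection onto $C$, and then invoke Moreau's decomposition $g = \Pi_C(g) + \Pi_{C^*}(g)$ with orthogonal summands to identify $\|\Pi_C(g)\| = \mathrm{dist}(g, C^*)$.
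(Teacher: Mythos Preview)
Your argument is correct. Note, however, that the paper itself does not prove this proposition: it is simply restated from \cite{venkat} and used as a black box, so there is no ``paper's own proof'' to compare against here. Your pointwise bound $\sup_{z \in C \cap \mathbb{S}^{p-1}} \langle g, z\rangle \le \mathrm{dist}(g, C^*)$ via $\langle g, z\rangle = \langle g-u, z\rangle + \langle u, z\rangle \le \|g-u\|$ is exactly the standard argument (and is the one given in the cited reference), and the Moreau-decomposition variant you sketch at the end is equivalent and in fact shows the pointwise inequality is an equality when the supremum is taken over $C$ intersected with the unit ball.
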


We can then square (\ref{propVenkat}) use Jensen's inequality to obtain
\begin{equation}
\label{jensen}
\omega(C \cap \mathbb{S}^{p-1})^{2} \leq \E_g[\mbox{dist}(g,C^*)^2]
\end{equation}
We note here that the dual cone of the Tangent cone is the Normal cone, and vice-versa.

Thus, to derive measurement bounds, we only need to calculate the square of the gaussian width of the interSection of the tangent cone at $x^\star$ with respect to the atomic norm and the unit sphere. This value can be bounded by the distance of a gaussian random vector to the Normal cone at the same point, as implied by  (\ref{jensen}). In the next Section, we derive bounds on this quantity.

\section{Gaussian Width of the Normal Cone of the Group Sparsity Norm}
\label{sec:normalcone}

For generic groups $\G$, we have

\begin{align}
\label{eqn:normalGen}
v \in \N_\A(x^\star) \Leftrightarrow 
\exists \gamma \geq 0 ~\ :  \langle v, x^\star \rangle = \gamma \|x^\star\|_{\A}, 
 \|v_G\|=\gamma \mbox{ if } G\in \G^\star ,~\  \|v_G\|\leq \gamma \text{ if } G\not\in \G^\star.
\end{align}
It is not hard to see that, in the case of disjoint groups, 
\begin{align}
\label{disjointcone}
\mathcal{N}_\A(x^\star) &= \{ z \in \mathbb{R}^p : z_i = \gamma \frac{(x^\star)_i}{||x^\star||}  ~\ \forall G \in \G^\star , \\  \notag &~\   ||z_G|| \leq \gamma ~\ \forall G ~\notin ~\G^\star , \gamma  \geq 0 \} \end{align}
However, in the case of overlapping groups, we do not know how to obtain such a closed from.  

We now prove the main result of this paper, a sufficient number of gaussian measurements needed to recover a group-sparse signal:

\begin{theorem}
\label{mainTh}
To exactly recover a $k$- group sparse signal decomposed into $M$ groups in $\R^p$, $(\sqrt{2\log(M-k)} + \sqrt{B})^2 k + kB$ iid gaussian measurements are sufficient.
\end{theorem}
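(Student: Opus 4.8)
The plan is to push the problem through the machinery of Section~\ref{sec:theory}. By Proposition~\ref{corl:width} it suffices to bound $\omega(\Omega)^2$, where $\Omega=\mathcal{T}_\A(x^\star)\cap\mathbb{S}^{p-1}$; by Proposition~\ref{prop3.6} together with~(\ref{jensen}) it suffices in turn to bound $\E_g\big[\mathrm{dist}(g,\N_\A(x^\star))^2\big]$, the expected squared Euclidean distance of a standard Gaussian vector $g\sim\N(0,I)$ to the normal cone. So the whole proof reduces to exhibiting, for every $g$, a single convenient point $\widehat v=\widehat v(g)\in\N_\A(x^\star)$ and averaging $\|g-\widehat v\|^2$ over $g$.

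I would construct $\widehat v$ directly from the description~(\ref{eqn:normalGen}), with a scalar $\gamma>0$ to be fixed at the very end. Let $S=\bigcup_{G\in\G^\star}G$ be the union of the $k$ active groups; since each active group has at most $B$ coordinates, $|S|\le kB$. Write $\widehat v=\widehat v^{\mathrm a}+\widehat v^{\mathrm i}$, where $\widehat v^{\mathrm a}$ is a deterministic ($g$-independent) vector supported on $\mathrm{supp}(x^\star)\subseteq S$, namely a $\gamma$-scaled dual certificate of $\Omega_{overlap}^\G$ at $x^\star$, chosen so that $\|\widehat v^{\mathrm a}_G\|=\gamma$ for every $G\in\G^\star$ and $\langle\widehat v,x^\star\rangle=\gamma\|x^\star\|_\A$; and $\widehat v^{\mathrm i}$ is supported on $S^c$ and is obtained by projecting $g$ restricted to $S^c$, group by group, onto the Euclidean ball of radius $\gamma$, so that $\|\widehat v_G\|\le\gamma$ for every inactive $G$. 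Because every inactive group is disjoint from $\mathrm{supp}(x^\star)$, the piece $\widehat v^{\mathrm a}$ does not affect the inactive constraints, while $\widehat v^{\mathrm i}$ clearly does not affect the active ones, so $\widehat v\in\N_\A(x^\star)$.

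Splitting the distance over $S$ and $S^c$,
\[
\E_g\|g-\widehat v\|^2=\E_g\|g_S-\widehat v^{\mathrm a}\|^2+\E_g\|g_{S^c}-\widehat v^{\mathrm i}\|^2 .
\]
Since $\widehat v^{\mathrm a}$ is deterministic, the first term equals $\E\|g_S\|^2+\|\widehat v^{\mathrm a}\|^2=|S|+\|\widehat v^{\mathrm a}\|^2$, and since every coordinate of $S$ lies in at least one active group, $\|\widehat v^{\mathrm a}\|^2\le\sum_{G\in\G^\star}\|\widehat v^{\mathrm a}_G\|^2=k\gamma^2$; hence the first term is at most $kB+k\gamma^2$. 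Group-wise ball projection bounds the second term by $\sum_{G\notin\G^\star}\E\big[(\|g_G\|-\gamma)_+^2\big]$, a sum over the $M-k$ inactive groups; each $g_G\in\R^{|G|}$ with $|G|\le B$ is a $1$-Lipschitz function of a standard Gaussian with $\E\|g_G\|\le\sqrt B$, so Gaussian concentration of the norm gives $\Pr[\|g_G\|-\sqrt B>t]\le e^{-t^2/2}$, and taking $\gamma=\sqrt{2\log(M-k)}+\sqrt B$ makes each summand $O\big((M-k)^{-1}\big)$ and the sum $O(1)$. Combining, $\E_g[\mathrm{dist}(g,\N_\A(x^\star))^2]\le k\big(\sqrt{2\log(M-k)}+\sqrt B\big)^2+kB+O(1)$, which by Proposition~\ref{corl:width} is exactly the claimed number of measurements.

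The real obstacle is the overlap. First, an inactive group can share coordinates with $S$, so its constraint $\|v_G\|\le\gamma$ couples the deterministic active piece and the projected inactive piece; I intend to decouple them by supporting $\widehat v^{\mathrm a}$ on $\mathrm{supp}(x^\star)$, but one must then check that a dual certificate with \emph{equality} $\|\widehat v^{\mathrm a}_G\|=\gamma$ on all $k$ active groups and vanishing on all inactive groups actually exists for arbitrary overlapping $\G$ — in the disjoint case this is the closed form~(\ref{disjointcone}), whereas in general it must be extracted from the optimality conditions of the infimum defining $\Omega_{overlap}^\G$. Second, a coordinate of $S^c$ may lie in several inactive groups, so the literal ``group-by-group'' projection has to be replaced by a projection onto the intersection of the relevant balls, and one must verify the residual is still dominated by $\sum_{G\notin\G^\star}(\|g_G\|-\gamma)_+^2$. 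The concentration estimate and the optimization over $\gamma$ are then routine. I would also remark that the crude bounds $|S|\le kB$ and $\|\widehat v^{\mathrm a}\|^2\le k\gamma^2$ over-count the true sparsity when active groups overlap, which is precisely why the $kB$ term is loose in general, as the introduction anticipates.
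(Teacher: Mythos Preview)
Your skeleton is the paper's: reduce to $\E_g[\mathrm{dist}(g,\N_\A(x^\star))^2]$ via Propositions~\ref{corl:width} and~\ref{prop3.6}, construct an explicit $r\in\N_\A(x^\star)$ split into an active piece (a scaled dual certificate on $S$) and an inactive piece on $S^c$, and bound the two contributions separately. The substantive difference is in how you set the scale $\gamma$. You fix $\gamma=\sqrt{2\log(M-k)}+\sqrt B$ deterministically, which forces you to project $g_{S^c}$ onto the inactive-group balls and then control the residual $\sum_{G\notin\G^\star}\E[(\|g_G\|-\gamma)_+^2]$ by Gaussian concentration of the norm. The paper instead takes $\gamma$ \emph{random}, namely $\gamma=t(w):=\max_{G\notin\G^\star}\|w_G\|$, and simply sets $r_{S^c}=w_{S^c}$; the $S^c$ residual is then identically zero, and the entire cost is absorbed into $\E[t(w)^2]\cdot\|v_S\|^2$, handled by the $\chi^2$-maximum bound of Lemma~\ref{lemma:chisq} together with Lemma~\ref{lemma:ballbound}. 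The paper's device is slicker precisely because it eliminates your second obstacle outright: no projection onto intersections of balls is ever needed, so overlapping inactive groups on $S^c$ cause no extra work. Your route is a bit more modular (the deterministic $\gamma$ and tail-bound argument port more directly to other measurement ensembles), but it leaves you with the ball-intersection issue you flag. Your first obstacle---existence of the active-side certificate with equality on all $k$ active groups---is shared by both arguments; the paper also simply asserts such a $v$ exists and invokes~(\ref{eqn:normalGen}).
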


To prove this result, we need two lemmas:

\begin{lemma}\label{lemma:chisq}
Let $q_1,\ldots,q_L$ be $L$, $\chi$-squared random variables with $d$-degrees of freedom.  Then \[
\E[\max_{1\leq i \leq L} q_i]\leq (\sqrt{2\log(L)} + \sqrt{d})^2.
\]
\end{lemma}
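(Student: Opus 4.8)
The plan is to avoid assuming any independence among the $q_i$ and instead run a Cram\'er--Chernoff (``soft-max'') argument driven only by the moment generating function of a single chi-squared variable. First I would recall the standard fact that if $q\sim\chi^2_d$ then $\E[e^{\lambda q}]=(1-2\lambda)^{-d/2}$ for every $0\le\lambda<1/2$. Then, for any fixed $\lambda\in(0,1/2)$, exponentiate the maximum: since $e^{\lambda\max_i q_i}=\max_i e^{\lambda q_i}\le\sum_i e^{\lambda q_i}$ and $\log$ is increasing and concave, Jensen's inequality gives
\[
\lambda\,\E\Bigl[\max_{1\le i\le L}q_i\Bigr]=\E\Bigl[\log\max_i e^{\lambda q_i}\Bigr]\le\log\E\Bigl[\sum_i e^{\lambda q_i}\Bigr]\le\log\bigl(L(1-2\lambda)^{-d/2}\bigr),
\]
hence $\E[\max_i q_i]\le\frac{1}{\lambda}\bigl(\log L-\frac d2\log(1-2\lambda)\bigr)$. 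The only probabilistic input is the union-type bound $\E[\sum_i e^{\lambda q_i}]=\sum_i\E[e^{\lambda q_i}]\le L(1-2\lambda)^{-d/2}$, which needs no independence, so the lemma holds for arbitrarily dependent $q_i$.

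Next I would tidy the right-hand side with the elementary inequality $-\log(1-2\lambda)\le\frac{2\lambda}{1-2\lambda}$ (equivalently $\log y\le y-1$ with $y=(1-2\lambda)^{-1}\ge1$), obtaining
\[
\E\Bigl[\max_{1\le i\le L}q_i\Bigr]\le\frac{\log L}{\lambda}+\frac{d}{1-2\lambda},\qquad 0<\lambda<1/2,
\]
and then minimize the right-hand side over $\lambda$. Setting the derivative to zero yields $\lambda_\star=\sqrt{\log L}\,/\,(\sqrt{2d}+2\sqrt{\log L})$, which lies in $(0,1/2)$; then $1-2\lambda_\star=\sqrt{2d}\,/\,(\sqrt{2d}+2\sqrt{\log L})$, and substituting back, together with the identity $\sqrt{2d}=2\sqrt{d/2}$ to clean up the algebra, collapses the bound to $2\bigl(\sqrt{\log L}+\sqrt{d/2}\bigr)^2=\bigl(\sqrt{2\log L}+\sqrt d\bigr)^2$, which is exactly the claimed inequality. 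The degenerate case $L=1$ is immediate, since then $\E[\max_i q_i]=d=(\sqrt{0}+\sqrt d)^2$.

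I do not expect a genuine obstacle here: the main work is the routine one-variable optimization in the last step and checking that $\lambda_\star$ stays in the admissible interval $(0,1/2)$. One deliberate choice worth flagging is that the lossy bound $-\log(1-2\lambda)\le\frac{2\lambda}{1-2\lambda}$ is precisely what makes the optimization close in a clean form and reproduce the stated constant; a sharper treatment of $-\log(1-2\lambda)$ would improve the constant slightly but leave a much uglier expression, so I would keep the version above. When the lemma is later invoked in the proof of Theorem~\ref{mainTh}, it will be applied with $L=M-k$ (the number of inactive groups) and $d$ no larger than the maximal group size $B$, which is the source of the $(\sqrt{2\log(M-k)}+\sqrt B)^2$ factor there.
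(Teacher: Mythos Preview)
Your proof is correct and is essentially the paper's own argument: the same Cram\'er--Chernoff/soft-max bound, the same chi-squared MGF, and in fact the same optimizer (the paper's choice $t=(2+2\epsilon)^{-1}$ with $\epsilon=\sqrt{d/(2\log L)}$ is exactly your $\lambda_\star=\sqrt{\log L}/(\sqrt{2d}+2\sqrt{\log L})$). The only difference is presentational: the paper plugs in this $t$ and asserts the final inequality directly, whereas you insert the intermediate step $-\log(1-2\lambda)\le \frac{2\lambda}{1-2\lambda}$ to make the algebra that produces $(\sqrt{2\log L}+\sqrt d)^2$ transparent.
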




\begin{proof}
Let $M_L := \max_{1\leq i \leq L} q_i$. For $t>0$, we have that
\begin{align*}
\E[M_L] &= \frac{\log[\exp(t\cdot \E[M_L])]}{t} \\
               &\stackrel{(i)}\leq \frac{\log[\E[\exp(t \cdot M_L)]]}{t} \\
               &\stackrel{(ii)}= \frac{\log[\E [\max_{1\leq j \leq L} \exp(t \cdot q_j)]]}{t} \\
               &\stackrel{(iii)} \leq \frac{\log[L \E[\exp(t \cdot q_1)]]}{t} \\
               &= \frac{\log(L) - \frac d2 \log(1 - 2t)}{t} 
\end{align*}
Where (i) follows from Jensen's inequality , (ii) follows from the monotonicity of the exponential function, and (iii) merely bounds the maximum by the sum over all the elements. Now, setting $t = (2 + 2\epsilon)^{-1}$ with $\epsilon = \sqrt{\frac{d}{2\log(L)}}$ yeilds $\E[M_L] \leq (\sqrt{2\log(L)} + \sqrt{d})^2$ \qed
\end{proof}

Note that $t$ can be optimized depending on the application.    We use this particular choice because it makes no assumptions about the relative magnitudes of $(M-k)$ and $B$.

\begin{lemma}\label{lemma:ballbound}
Suppose $v\in \R^p$ is supported on some set of groups $\G^\star \subset \G$
\[
	\|v\| \leq \sqrt{|\G^\star|}~\ \|v\|_{\A}^*\,.
\]
\end{lemma}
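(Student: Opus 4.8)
The plan is to reduce everything to the dual-norm formula \eqref{eq:atomic-norm-dual}, namely $\|v\|_{\A}^* = \max_{G\in\G}\|v_G\|$, and to a coordinate-counting argument that handles overlaps for free. First I would note that ``$v$ is supported on $\G^\star$'' means $\mathrm{supp}(v)\subseteq\bigcup_{G\in\G^\star}G$, so every coordinate $i$ with $v_i\neq 0$ lies in \emph{at least one} group of $\G^\star$. This is the only structural fact about $v$ that I will use.

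Next I would expand $\sum_{G\in\G^\star}\|v_G\|^2 = \sum_{G\in\G^\star}\sum_{i\in G}v_i^2 = \sum_{i=1}^p v_i^2\,\bigl|\{G\in\G^\star: i\in G\}\bigr|$. For $i\notin\mathrm{supp}(v)$ the term vanishes, and for $i\in\mathrm{supp}(v)$ the multiplicity $|\{G\in\G^\star: i\in G\}|$ is at least $1$ by the previous step. Hence $\sum_{G\in\G^\star}\|v_G\|^2 \geq \sum_{i}v_i^2 = \|v\|^2$. Note that overlaps only help here: they make the left-hand side larger, so the inequality $\|v\|^2\le\sum_{G\in\G^\star}\|v_G\|^2$ is in no danger from overlapping groups.

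Then I would bound the sum by its largest term times the number of terms: $\sum_{G\in\G^\star}\|v_G\|^2 \le |\G^\star|\,\max_{G\in\G^\star}\|v_G\|^2 \le |\G^\star|\,\max_{G\in\G}\|v_G\|^2 = |\G^\star|\,(\|v\|_{\A}^*)^2$, the last equality being \eqref{eq:atomic-norm-dual}. Chaining the two displays gives $\|v\|^2 \le |\G^\star|\,(\|v\|_{\A}^*)^2$, and taking square roots yields the claim.

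Honestly there is no serious obstacle: the proof is a two-line computation once one observes that ``supported on $\G^\star$'' forces each active coordinate into some group of $\G^\star$. The only point that deserves a sentence of care is exactly that observation, since a priori one might worry that overlaps cause double counting and spoil the bound $\|v\|^2\le\sum_{G}\|v_G\|^2$; as noted above, overlaps push the sum up, not down, so the bound is safe and the constant $\sqrt{|\G^\star|}$ is genuinely independent of the group sizes and overlap pattern.
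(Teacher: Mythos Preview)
Your proof is correct, and it takes a genuinely different route from the paper's. The paper argues by duality: it shows that for any $z$ supported on $\bigcup_{G\in\G^\star}G$ one has $\|z\|_{\A}\le\sqrt{|\G^\star|}\,\|z\|$, by writing $z=\sum_{G\in\G^\star}b_G$ with the $b_G$ having disjoint supports and applying the $\ell_1$--$\ell_2$ inequality $\|\beta\|_1\le\sqrt{k}\|\beta\|_2$; the desired bound on $\|v\|$ then follows from the dual pairing. You instead work directly with $v$ and the explicit dual-norm formula $\|v\|_{\A}^*=\max_{G\in\G}\|v_G\|$, using the coordinate-counting inequality $\|v\|^2\le\sum_{G\in\G^\star}\|v_G\|^2$ followed by bounding the sum by $|\G^\star|$ times its largest term. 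Your argument is more elementary (no duality step, no need to construct a disjoint decomposition) and makes transparent why overlaps only help; the paper's argument has the side benefit of establishing the primal inequality $\|z\|_{\A}\le\sqrt{|\G^\star|}\,\|z\|$ on the support, which is of independent interest.
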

\begin{proof}
	By duality, it suffices to show that $\|z\|_{\A} \leq \sqrt{|\G^\star|}~\ \|z\|$ for all $z$.  For any $z$, there exists a representation $z = \sum_{G\in \G^\star} b_G$ where none of the supports of $b_G$ overlap.  It then follows that
	\begin{align*}
	\|z\|_{\A} &\stackrel{(i)}\leq \sum_{G\in \G^\star} \|b_G\| \\ &\stackrel{(ii)}\leq \sqrt{|\G^\star|} \left(\sum_{G\in \G^\star} \|b_G\|^2\right)^{1/2} \\  &= \sqrt{|\G^\star|}~\ \|z\|
	\end{align*}
	Where (i) follows from the definition of the norm $\|\cdot\|_\A$ and (ii) is a consequence of the relation $\|\beta\|_1 \leq \sqrt{k} \| \beta \|_2$ for $k$ dimensional vectors $\beta$ \qed
\end{proof}

\begin{proof}[Proof of Theorem~\ref{mainTh}]

\emph{Intuition}:
Note that, from (\ref{jensen}), we need to bound the distance between $\N_\A(x^\star)$ and a random gaussian vector. In the following proof, we carefully construct a specific vector $r \in \N_\A(x^\star)$ and bound the distance from $r$ to the gaussian vector. Naturally, this will be an upper bound to the distance desired.

Now, let $S  =\cup_{G \in \G^\star} G$, \emph{i.e.} $S$ is the indices corresponding to the union of groups that support $x^\star$. Note that $S \subset \{1,2,\ldots,p\}$.
Since the normal cone is nonempty, let $v \in \N_\A(x^\star)$ and let
\begin{equation}
\label{eq:dualunity}
\|v\|^*_\A = 1
\end{equation}
then we must have that $\langle v,x^\star \rangle = \|x^\star\|_\A$. Moreover, for each $G$ that intersects $S$, $\|v_G\| = 1$. This follows from the definition in (\ref{eqn:normalGen}). Also, let $v_{S^c} = 0$.  It can be verified that $v$ satisfies all the properties in (\ref{eqn:normalGen}).



Let $w \sim \N(0,I_p)$ be a vector with iid gaussian entries. We can write $w = [w_S ~\ w_{S^c}]^T$. Let $t(w) = \max_{G\not\in \G^\star} \|w_G\|$. 

Let us now construct a vector $r\in \N_\A(x^\star)$. We can decompose $r$ as $r = [r_S ~\ r_{S^c}]^T$. Let $r_S = t(w) \cdot v_S$, and $r_{S^c} = w_{S^c}$. 

From  \eqref{eqn:normalGen}, and from our definition of $t(w)$, we have $r \in \N_\A(x^\star)$. Referring to  (\ref{jensen}), we now consider the expected squared distance between $\N_\A(x^\star)$ and $w$:
\begin{align*}
\E[\mbox{dist}(w,C^*)] &\leq \E[|| r - w ||^2] \\
&= \E [\|r_S - w_S\|^2 + \|r_{S^c} - w_{S^c}\|^2] \\
&\stackrel{(i)}= \E [\|r_S - w_S\|^2] \\
&\stackrel{(ii)}= \E[ \|r_S\|^2 ] + \E [\|w_S\|^2]\\
&= \E[\|t(w) \cdot v_S\|^2] + \E[\|w_S\|^2]\\
&\stackrel{(iii)}=  \E[t(w)^2] \cdot \|v_S\|^2 + \E[\|w_S\|^2] \\
&\stackrel{(iv)}= \E[t(w)^2] \cdot \|v_S\|^2 + |S| \\
&\stackrel{(v)}\leq (\sqrt{2\log(M-k)} + \sqrt{B})^2 \cdot  \|v_S\|^2 + kB \\
&\stackrel{(vi)}\leq (\sqrt{2\log(M-k)} + \sqrt{B})^2 \cdot k + kB
\end{align*}

Where (i) follows because $S$ and $S^c$ are disjoint, (ii) follows from the fact that $r_S$ and $w_S$ are independent, (iii) follows from the fact that $v$ is deterministic. We obtain (iv) since $\|w_S\|^2$ is a $\chi^2$ random variable with $|S|$ degrees of freedom.  (v) follows from Lemma \ref{lemma:chisq}, and from the fact that $kB$ is a upper bound on the signal sparsity. Finally, (vi) follows from Lemma \ref{lemma:ballbound}, noting that ${| \G^\star |} \leq k$, and $\|v\|_\A^* = 1$ from (\ref{eq:dualunity}).\qed
\end{proof}

If the groups are disjoint to begin with , the the normal cone will be given by (\ref{disjointcone}), and $\|v_S\|^2 = k$.  Also, in this case, we have $|S| = kB$. We see that  we do not pay an additional penalty when the groups overlap, except that the bound for the signal sparsity becomes loose. This fact is surprising, since one would expect that one would need more measurements to effectively capture the dependencies among the overlapping groups. 

\subsection{Remarks}
The $k B$ term in the bound is an upper-bound on the signal sparsity. In the case of highly overlapping groups, this value may be much larger than the signal sparsity. This is an unfortunate artifact of the general approach we take to derive a bound on the number of measurements. If the specific structure of groups is known (trees, hierarchies, etc.), one can refine the bound accordingly. Of course, the bound will be tightest when there is a block-sparse structure, i.e. there is no overlap between groups. 

It can be seen from Theorem \ref{mainTh} that the number of measurements is linear in $k$ and $B$. Hence, the number of measurements that are sufficient for signal recovery grows linearly with the number of active groups in the signal, and also the maximum group size. 

We note that although we pay no extra price to measure the signal when the groups overlap, there is an additional cost in the recovery process of the signal, in that the groups need to first be separated by replication of the coefficients \cite{jacob}, or resort to a primal-dual method to solve the problem \cite{mosci}.

\subsection{Noisy Observations}
The results we obtain can be easily extended to the case where we obtain noisy observations, assuming that the noise is bounded. In the noisy case, we observe 
\[
y = \Phi x^\star + \theta , ~\  \|\theta\| \leq \delta
\]
We then solve the atomic norm minimization problem, with a relaxed constraint to take into account the bounded noise:
\begin{equation}
\label{minAnormNoise}
\hat{x} = \underset{x \in \R^p}{\operatorname{argmin}} ||x||_\A  ~\ \textbf{s.t. } \|y - \Phi x\| \leq \delta
\end{equation}
We restate corollary $3.3$ from \cite{venkat}:

\begin{proposition} \label{corl:widthnoisy}[\cite{venkat}, Corollary 3.3]
Let $\Phi: \R^p \rightarrow \R^n$ be a random map with i.i.d. zero-mean Gaussian entries having variance $1/n$.  Further let $\Omega = T_\A(x^*) \cap \mathbb{S}^{p-1}$ denote the spherical part of the tangent cone $T_\A(x^\star)$.  Suppose that we have measurements $y = \Phi x^\star + \theta$, and $\|\theta\| \leq \delta$. Suppose we solve the convex program (\ref{minAnormNoise}). Let $\hat{x}$ denote the optimum of (\ref{minAnormNoise}). Also, suppose $\| \Phi z \| \geq \epsilon \|z\| ~\ \forall z \in T_\A(x^\star)$. Then $\|x^\star - \hat{x}\| \leq \frac{2\delta}{\epsilon}$  with high probability provided that
	\begin{equation*}
	n \geq \frac{\omega(\Omega)^2}{(1-\epsilon)^2}+\mathcal{O}(1).
	\end{equation*}
\end{proposition}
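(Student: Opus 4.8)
The plan is to split the argument into two independent pieces, exactly as in \cite{venkat}: a purely deterministic error bound that holds whenever the restricted singular value hypothesis $\|\Phi z\|\geq \epsilon\|z\|$ is in force on $T_\A(x^\star)$, and a probabilistic verification that this hypothesis holds with high probability once $n$ exceeds the stated threshold. The error bound is essentially algebraic, and all the genuine content lives in the probabilistic step, which is a direct application of Gordon's comparison inequality and Gaussian concentration.

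First I would establish the deterministic bound. Since $\|y - \Phi x^\star\| = \|\theta\| \leq \delta$, the true signal $x^\star$ is feasible for (\ref{minAnormNoise}), so optimality of $\hat{x}$ gives $\|\hat{x}\|_\A \leq \|x^\star\|_\A$. By the definition of the tangent cone (\ref{TconeD}), the error $z := \hat{x} - x^\star$ therefore lies in $T_\A(x^\star)$. Applying the hypothesis to this $z$ and using the triangle inequality
\[
\|\Phi(\hat{x}-x^\star)\| \leq \|\Phi\hat{x} - y\| + \|y - \Phi x^\star\| \leq \delta + \delta,
\]
I obtain $\epsilon\|\hat{x}-x^\star\| \leq \|\Phi(\hat{x}-x^\star)\| \leq 2\delta$, hence $\|\hat{x}-x^\star\| \leq 2\delta/\epsilon$. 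This is immediate once the error is placed inside the tangent cone.

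The substance is showing the hypothesis holds with high probability. By homogeneity, $\|\Phi z\|\geq\epsilon\|z\|$ on the full cone is equivalent to $\sigma_{\min} := \min_{z\in\Omega}\|\Phi z\| \geq \epsilon$ over the spherical section $\Omega = T_\A(x^\star)\cap\mathbb{S}^{p-1}$. Writing $\Phi = \tfrac{1}{\sqrt n}G$ with $G$ having standard Gaussian entries, Gordon's inequality \cite{gordon} gives $\E[\min_{z\in\Omega}\|Gz\|] \geq \lambda_n - \omega(\Omega)$, where $\lambda_n = \E_{g\sim\N(0,I_n)}\|g\| = \sqrt{n} - \mathcal{O}(1/\sqrt{n})$. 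Dividing by $\sqrt n$ yields $\E[\sigma_{\min}] \geq 1 - \omega(\Omega)/\sqrt n - \mathcal{O}(1/n)$. Because $G\mapsto\min_{z\in\Omega}\|Gz\|$ is $1$-Lipschitz in the Frobenius norm (each $G\mapsto\|Gz\|$ with $\|z\|=1$ is $1$-Lipschitz, and the minimum of $1$-Lipschitz functions is $1$-Lipschitz), Gaussian concentration lets me pass from expectation to a high-probability lower bound, losing only an additive deviation $t/\sqrt n$ with failure probability $\exp(-t^2/2)$.

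Finally I would rearrange the threshold. Requiring the high-probability lower bound $1 - (\omega(\Omega)+\mathcal{O}(1))/\sqrt n \geq \epsilon$ is the same as $\sqrt n \geq (\omega(\Omega)+\mathcal{O}(1))/(1-\epsilon)$, i.e. $n \geq \omega(\Omega)^2/(1-\epsilon)^2 + \mathcal{O}(1)$, which is precisely the claimed condition. The main obstacle is this probabilistic step: one must invoke Gordon's inequality correctly under the variance-$1/n$ normalization, track the $\mathcal{O}(1)$ slack coming from $\lambda_n$ versus $\sqrt n$ and from the concentration deviation $t$, and confirm the Lipschitz constant so the concentration bound applies; the deterministic error estimate, by contrast, is routine.
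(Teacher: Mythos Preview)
The paper does not supply its own proof of this proposition: it is explicitly introduced with ``We restate corollary $3.3$ from \cite{venkat}'' and is used as a black box to derive Corollary~\ref{corr:noisy}. There is therefore nothing in the paper to compare your argument against.

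That said, your proposal is correct and is precisely the standard two-step argument from \cite{venkat}: a deterministic cone-constrained error bound (feasibility of $x^\star$ forces $\hat{x}-x^\star\in T_\A(x^\star)$, then the restricted minimum singular value hypothesis combined with the triangle inequality gives $\|\hat{x}-x^\star\|\leq 2\delta/\epsilon$), followed by Gordon's escape-through-a-mesh inequality plus Gaussian concentration for the $1$-Lipschitz map $G\mapsto\min_{z\in\Omega}\|Gz\|$ to certify that hypothesis with high probability once $n\geq \omega(\Omega)^2/(1-\epsilon)^2+\mathcal{O}(1)$. You have the normalization, the Lipschitz constant, and the $\lambda_n=\sqrt{n}-\mathcal{O}(1/\sqrt{n})$ slack handled correctly.
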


Substituting the result in Theorem \ref{mainTh} in Proposition \ref{corl:widthnoisy}, we have the following corollary yielding a sufficient condition to accurately recover a signal when the measurements are corrupted with bounded noise:

\begin{corollary}
\label{corr:noisy}
Suppose we wish to recover a $k-$ group sparse signal having $M$ groups, such that the maximum group size is $B$. Let $\hat{x}$ be the optimum of the convex program (\ref{minAnormNoise}). To have $\|\hat{x} - x^\star\| \leq \frac{2\delta}{\epsilon}$ with high probability, 
\[
 \frac{(\sqrt{2\log(M-k)} + \sqrt{B})^2 k + kB}{(1-\epsilon)^2}
 \]
 iid Gaussian measurements are sufficient.
 \end{corollary}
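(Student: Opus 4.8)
The plan is to obtain this corollary as an immediate consequence of Theorem~\ref{mainTh} and Proposition~\ref{corl:widthnoisy}; essentially no new analytic work is required beyond bookkeeping. First I would recall that, by Proposition~\ref{corl:widthnoisy}, once $n \geq \omega(\Omega)^2/(1-\epsilon)^2 + \mathcal{O}(1)$, the optimum $\hat x$ of (\ref{minAnormNoise}) satisfies $\|\hat x - x^\star\| \leq 2\delta/\epsilon$ with high probability, where $\Omega = \mathcal{T}_\A(x^\star)\cap\mathbb{S}^{p-1}$. Thus it suffices to control $\omega(\Omega)^2$ and plug it in.

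Second, I would reuse the chain set up in Section~\ref{sec:theory}: by (\ref{jensen}), $\omega(\mathcal{T}_\A(x^\star)\cap\mathbb{S}^{p-1})^2 \leq \E_g[\mathrm{dist}(g,\mathcal{N}_\A(x^\star))^2]$, since the dual cone of the tangent cone is exactly the normal cone. The proof of Theorem~\ref{mainTh} already exhibits a vector $r\in\mathcal{N}_\A(x^\star)$ and shows $\E_g[\mathrm{dist}(g,\mathcal{N}_\A(x^\star))^2] \leq (\sqrt{2\log(M-k)}+\sqrt{B})^2 k + kB$. Hence $\omega(\Omega)^2$ is bounded by the same quantity, and substituting into the requirement of Proposition~\ref{corl:widthnoisy} yields
\[
n \geq \frac{(\sqrt{2\log(M-k)}+\sqrt{B})^2 k + kB}{(1-\epsilon)^2} + \mathcal{O}(1),
\]
which is precisely the claimed sufficient condition, with the additive $\mathcal{O}(1)$ absorbed into the statement as in \cite{venkat}.

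The only subtlety — what I would flag as the ``main obstacle'', though it amounts to citing the right result rather than proving anything new — is that the hypothesis $\|\Phi z\| \geq \epsilon\|z\|$ for all $z\in\mathcal{T}_\A(x^\star)$ appearing in Proposition~\ref{corl:widthnoisy} is not something we are free to assume: it must hold with high probability for the random $\Phi$. This is supplied by Gordon's escape-through-the-mesh bound (the same tool behind Proposition~\ref{corl:width}), which guarantees that a Gaussian $\Phi$ with $n \gtrsim \omega(\Omega)^2/(1-\epsilon)^2$ rows keeps $\|\Phi z\|/\|z\|$ uniformly bounded below by $\epsilon$ on the cone, the $\mathcal{O}(1)$ slack covering concentration. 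Consequently the stated measurement count simultaneously certifies the restricted-singular-value condition and the error bound, which finishes the proof.
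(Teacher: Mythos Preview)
Your proposal is correct and matches the paper's own argument exactly: the paper simply says to substitute the Gaussian-width bound from Theorem~\ref{mainTh} into Proposition~\ref{corl:widthnoisy}, which is precisely what you do. Your additional remark that the restricted-singular-value hypothesis in Proposition~\ref{corl:widthnoisy} is itself delivered with high probability by Gordon's bound is a helpful clarification the paper leaves implicit.
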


\section{Experiments and Results}
\label{sec:results}
\vspace{-3mm}
We extensively tested our method against the standard lasso procedure. In the case where the groups overlap, we use the replication method outlined in \cite{jacob}, to reduce the problem to that of non overlapping groups. \vspace{-1mm}
We compare the number of measurements needed for our method with that needed for the lasso. For the lasso, we use the bound derived in \cite{venkat} , \emph{viz.} $(2s+1)\log(p-s)$.  We generate length $p =2000$ signals, made up of $M = 100$ non-overlapping groups of size $B = 20$. We set $k = 5$ groups to be ``active", and the values within the groups are drawn from a uniform $[0,1]$ distribution. The active groups are assigned uniformly at random. The sparsity of the signal will be $s = 100$

We use SpaRSA \cite{sparsa} for the lasso and the group lasso with overlap, learning $\lambda$ over a grid. Figure \ref{fig:complasso} displays the mean reconstruction error $||\hat{x} - x^*||_2^2 / p$ as a function of the number of random measurements taken. The errors have been averaged over 100 tests, and each time a new random signal was generated with the above mentioned parameters. 

From the parameters considered, we conclude that $\approx 380$ measurements are sufficient to recover the signal. Note that, when we have 380 measurements, the lasso does not recover the signal exactly, as seen in Figure \ref{fig:complasso}.

\begin{figure}[!h]
\centering
\includegraphics[scale = 0.38]{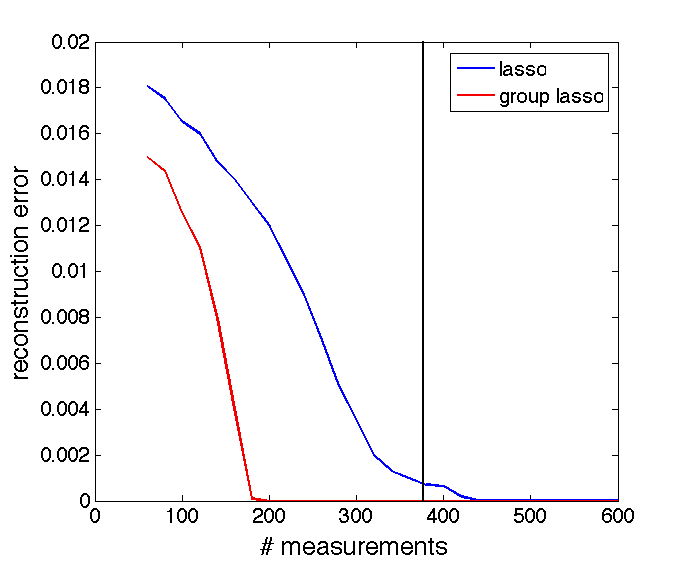}
\caption{Comparison with the lasso. The vertical line indicates our bound. Note that our bound (380) predicts exact recovery of the signal, while at the same value, the lasso does not recover the signal}
\label{fig:complasso}
\end{figure}

To show that the bound we compute holds regardless of the complexity of groupings, we consider the following scenario: Suppose we have $M = 100$ groups, each of size $B = 40$. $k = 5$ of those groups are active, and the values within each group are assigned from a uniform $[-1,1]$ distribution. We arrange these groups in three configurations:

\begin{enumerate}[(i)]
\item The groups do not overlap, yielding a signal of length $p = 4000$, and signal sparsity $s = 200$.
\item A partial overlapping scenario, where apart from the first and last group, every group has $20$ elements in common with a group above it, and $20$ common with the group below, giving $p = 2020$, $s \in [120, ~\ 200]$ depending on which of the 100 groups are active.
\item An almost complete overlap, where apart from one element in each group, the remaining elements are common to each group. This leads to $p = 139$ and $s = 44$
\item We also considered cases intermediate to the ones listed above. Specifically, we  considered (a) a highly overlapping scenario which is identical to the previous case, but with odd and even groups disjoint. We also consider (b) a random overlap case where the first 50 groups are non overlapping and the remaining 50 are assigned uniformly at random from the existing $p = 2000$ indices.
\end{enumerate}
The scenarios we consider are depicted in Figure \ref{fig:scenes}. In each of the cases, we compute the bound to be $\approx 630$. We can see from Figure \ref{fig:scenes_results} that the bound holds for all cases. The bound becomes looser as the complexity of the groupings increases. This, as argued before, is a result of the bound for the signal sparsity becoming looser. From the values of $p$ and $s$ computed for the three cases, we have the corresponding bounds for the lasso to be 3305 for the no overlap case (i), [1819, 3010] for the partial overlap case (ii) and 405 for the almost complete overlap case (iii)respectively. The lasso bounds for case (iv) will lie between those for case (ii) and (iii).

\begin{figure}[!h]
\begin{center}
\subfigure[types of groupings considered. Each set of coefficients encompassed by one color belongs to one group]{
\includegraphics[scale = 0.7]{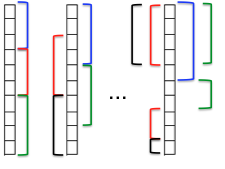}
\label{fig:scenes}}
\subfigure[performance on cases considered in figure \ref{fig:scenes}]{
\includegraphics[scale = 0.6]{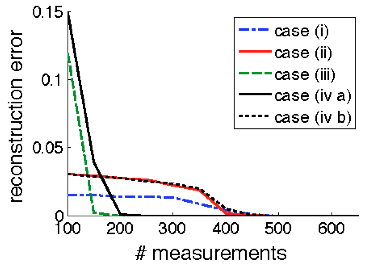}
\label{fig:scenes_results}}
\caption{(Best seen in color) Performance on various grouping schemes. Note that our bound evaluates to 630, clearly sufficient measurements to recover the signal. The corresponding bounds for the lasso (for cases (i), (ii) and (iii)) are 3305, [1819, 3010] (depending on $s$) and 405  respectively. We can see that, as the amount of overlap increases, our bound loosens, and for pathological cases the lasso bound is tighter}
\end{center}
\end{figure}

Finally, we consider the wavelet transform coefficients of the ``blocks" signal (Figure \ref{fig:blocks}). It was shown in \cite{nricip11} that the coefficients can be grouped, to account for parent child dependencies across scales of the wavelet transform, as shown in Figure \ref{fig:PC}.  In this case, for a $p = 16384$ length signal, we have $M = 16382$ groups, and the maximum group size is $B = 2$. We use the Haar wavelet bases to decompose the image. Figure \ref{fig:recons} shows the reconstruction obtained from $1690$ measurements, corresponding to the bound computed for $k = 47$.  We see that our bound yields a sufficient number of measurements for exact recovery.

\begin{figure}[!h]
\begin{center}
\includegraphics[scale = 0.5]{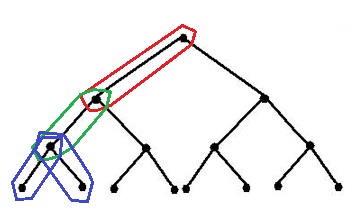}
\caption{Groups on the 1d wavelet transform}
\label{fig:PC}
\end{center}
\end{figure}

\begin{figure}[!h]
\begin{center}
\subfigure[original signal]{
\includegraphics[scale = 0.45]{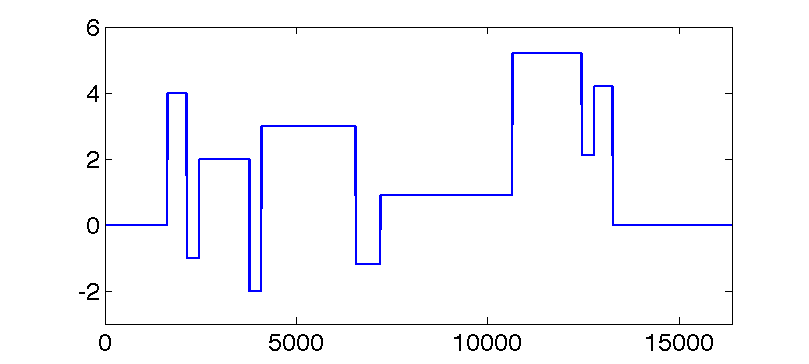}
\label{fig:blocks}}
\subfigure[reconstruction]{
\includegraphics[scale = 0.45]{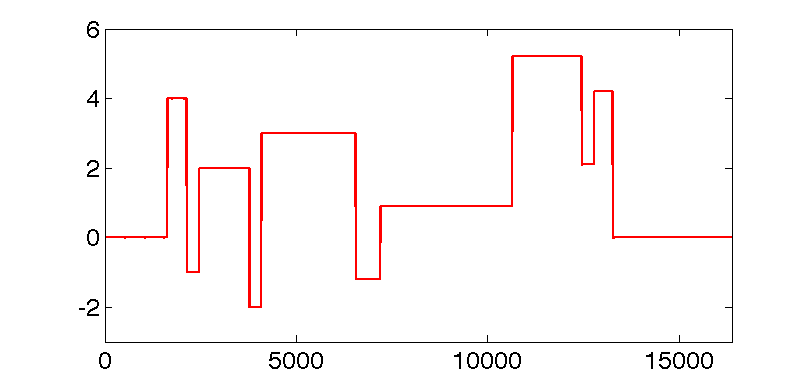}
\label{fig:recons}}
\caption{Exact reconstruction of a length 16384 signal from 1690 measurements in the wavelet domain}
\end{center}
\end{figure}

\vspace{-2mm}
\section{Conclusion}
\label{sec:conc}
\vspace{-2mm}
We showed that, when additional structure about the support of the signal to be estimated is known, we can recover the signal in much fewer measurements that what would be needed in the standard compressed sensing framework. Also, we showed that we surprisingly do not pay an extra penalty when the groups overlap each other. Moreover, the bound holds for arbitrary group structures, and can be used in a variety of  applications. The bounds we derive are tight, and can be extended to subgaussian measurement matrices by incurring a constant penalty. Experimental results on both toy and real data agree with the bounds we obtained.

\subsubsection*{Acknowledgements}
The authors wish to thank Waheed Bajwa and Guillaume Obozinski for insightful comments on the paper, which prompted several revisions to ensure correctness. 

\bibliographystyle{plain}
\bibliography{GS_measurement_arxiv}

\begin{thebibliography}{10}

\bibitem{bachConsistency}
F.~Bach.
\newblock Consistency of the group lasso and multiple kernel learning.
\newblock {\em Journal of Machine Learning Research}, 9:1179--1225, June 2008.

\bibitem{modelbased}
R.~G. Baraniuk, V.~Cevher, M.~F. Duarte, and C.~Hegde.
\newblock Model-based compressive sensing.
\newblock {\em IEEE Transactions on Information Theory}, 2010.

\bibitem{CRT}
E.~J. Candes, J.~Romberg, and T.~Tao.
\newblock Robust uncertainty principles: exact signal reconstruction from
  highly incomplete frequency information.
\newblock {\em IEEE Trans. Information Theory}, 52:489--509, 2006.

\bibitem{venkat}
V.~Chandrasekaran, B.~Recht, P.~A. Parrilo, and A~Willsky.
\newblock The convex geometry of linear inverse problems.
\newblock {\em preprint arXiv:1012.0621v1}, 2010.

\bibitem{crouse98}
M.~S. Crouse, R.~D. Nowak, and R.~G. Baraniuk.
\newblock Wavelet based statistical signal processing using hidden markov
  models.
\newblock {\em Transactions on Signal Processing}, 46(4):886--902, 1998.

\bibitem{donoho}
D.~L. Donoho.
\newblock Compressed sensing.
\newblock {\em IEEE Trans. Information Theory}, 52:1289--1306, 2006.

\bibitem{model09}
M.~F. Duarte, V.~Cevher, and R.~G. Baraniuk.
\newblock Model-based compressive sensing for signal ensembles.
\newblock {\em Allerton}, 2009.

\bibitem{gordon}
Y.~Gordon.
\newblock On {M}ilman's inequality and random subspaces which escape through a
  mesh in $\mathbb{R}^n$.
\newblock {\em Geometric aspects of functional analysis, Isr. Semin.},
  1317:84--106, 1986 - 87.

\bibitem{zhang09}
J.~Huang and T~Zhang.
\newblock The benefit of group sparsity.
\newblock {\em Technical report, arXiv:0901.2962. Preprint available at
  http://arxiv.org/pdf/0903.2962v2}, May 2009.

\bibitem{huang09}
J.~Huang, T.~Zhang, and D.~Metaxas.
\newblock Learning with structured sparsity.
\newblock {\em Technical report, arXiv:0903.3002. Preprint available at
  http://arxiv.org/pdf/0903.3002v2}, May 2009.

\bibitem{jacob}
L.~Jacob, G.~Obozinski, and J.~P. Vert.
\newblock Group lasso with overlap and graph lasso.
\newblock {\em Proceedings of the 26th International Conference on machine
  Learning}, 2009.

\bibitem{jenatton}
R.~Jenatton, J.Y. Audibert, and F.~Bach.
\newblock Structured variable selection with sparsity inducing norms.
\newblock {\em Technical report, arXiv:0904.3523. Preprint available at
  http://arxiv.org/pdf/0904.3523v3}, Sep 2009.

\bibitem{jenatton10}
R.~Jenatton, J.~Mairal, G.~Obozinski, , and F.~Bach.
\newblock Proximal methods for hierarchical sparse coding.
\newblock {\em Technical report, arXiv:1009.3139. submitted}, 2010.

\bibitem{subgaussian}
S.~Mendelson, A.~Pajor, and N.~Tomczak-Jaegermann.
\newblock Reconstruction and subgaussian operators in asymptotic geometric
  analysis.
\newblock {\em Geometric and Functional Analysis}, 17(4):1248 -- 1282, 2006.

\bibitem{analogCS}
M.~Mishali and Y.~Eldar.
\newblock Blind multi-band signal reconstruction: compressed sensing for analog
  signals.
\newblock {\em IEEE Trans. Signal Processing}, 57(30):993--1009, March 2009.

\bibitem{mosci}
S.~Mosci, S.~Villa, A.~Verri, and L.~Rosasco.
\newblock A primal-dual algorithm for group sparse regularization with
  overlapping groups.
\newblock {\em Neural Information Processing Systems}, 2010.

\bibitem{cosamp}
D.~Needell and J.~Tropp.
\newblock Cosamp: Iterative signal recovery from incomplete and inaccurate
  samples.
\newblock {\em Appl. Comput. Harmon. Anal.}, 26:301--321, 2008.

\bibitem{Mest}
S.~Negahban, P.~Ravikumar, M.~Wainwright, and B.~Yu.
\newblock A unified framework for high-dimensional analysis of m-estimators
  with decomposable regularizers.
\newblock {\em Preprint ArXiv :1010:2731v1}, October 2010.

\bibitem{nricip11}
N.~Rao, R.~Nowak, S.~Wright, and N.~Kingsbury.
\newblock Convex approaches to model wavelet sparsity patterns.
\newblock {\em IEEE International Conference on Image Processing}, 2011.

\bibitem{wets}
T.~Rockafellar and J.~B. Wets.
\newblock Variational analysis.
\newblock {\em Springer Series of Comprehensive Studies in Mathematics}, 317,
  1997.

\bibitem{romberg}
J.~K Romberg, H.~Choi, and R.~G Baraniuk.
\newblock Bayesian tree structured image modeling using wavelet domain hidden
  markov models.
\newblock {\em Transactions on Image Processing}, March 2000.

\bibitem{pathway}
A.~Subramanian~et al.
\newblock Gene set enrichment analysis: a knowledge-based approach for
  interpreting genome-wide expression proÞles.
\newblock {\em National Academy of Sciences}, 102:15545Ð15550, 2005.

\bibitem{tibshirani}
R.~Tibshirani.
\newblock Regression shrinkage and selection via the lasso.
\newblock {\em Journal of the Royal Statistical Society. Series B}, pages
  267--288, 1996.

\bibitem{sparsa}
S.~J. Wright, R.~D. Nowak, and M.~A.~T. Figueiredo.
\newblock Sparse reconstruction by separable approximation.
\newblock {\em Transactions on Signal Processing}, 57:2479--2493, 2009.

\bibitem{yuanlin}
M.~Yuan and Y.~Lin.
\newblock Model selection and estimation in regression with grouped variables.
\newblock {\em Journal of the royal statistical society. Series B}, 68:49--67,
  2006.

\end{thebibliography}

\end{document}